\documentclass{article}
\usepackage{arxiv}
\usepackage[utf8]{inputenc}
\usepackage{color}
\usepackage{amsmath,amssymb,amsthm}
\usepackage{mathtools,dsfont}

\usepackage{multirow, rotating}
\usepackage{xspace}


\usepackage
[vlined,ruled,titlenotnumbered]
{algorithm2e}
\usepackage{xcolor, soul}
\usepackage{colortbl}
\sethlcolor{yellow}
\newcommand{\cmmnt}[1]{}
\sethlcolor{lightgray}
\usepackage{xcolor}   
\newcommand{\swgsemo}{SW-GSEMO\xspace}
\newcommand{\slidinggsemo}{\swgsemo}
\usepackage{color} 
\usepackage{dblfloatfix}
\usepackage{xspace,dsfont}

\newcommand{\ie}{i.\,e.\xspace}
\newcommand{\eg}{e.\,g.\xspace}

\newcommand{\temax}{t_{\mathit{max}}}

\newcommand{\OPT}{\mathrm{OPT}}
\newcommand{\R}{\mathbb{R}}
\newcommand{\N}{\mathbb{N}}

\newcommand{\gsemo}{GSEMO\xspace}

\usepackage{algpseudocode,booktabs,amsmath}

\title{Fast Pareto Optimization Using Sliding Window Selection}
\author{Frank Neumann\\
Optimisation and Logistics\\
School of Computer and Mathematical Sciences\\
The University of Adelaide\\
Adelaide, Australia
\And
Carsten Witt\\
Algorithms, Logic and Graphs\\
DTU Compute\\ Technical University of Denmark\\
2800 Kgs. Lyngby Denmark
}

\usepackage{graphicx}
\newtheorem{definition}{Definition}

\newtheorem{theorem}{Theorem}

\begin{document}
\maketitle

\begin{abstract}
Pareto optimization using evolutionary multi-objective algorithms has been widely applied to solve constrained submodular optimization problems. A crucial factor determining the runtime of the used evolutionary algorithms to obtain good approximations is the population size of the algorithms which grows with the number of trade-offs that the algorithms encounter.
In this paper, we introduce a sliding window speed up technique for recently introduced algorithms. We prove that
our technique eliminates the population size as a crucial factor negatively impacting the runtime and achieves the same theoretical performance guarantees as previous approaches within less computation time. Our experimental investigations for the classical maximum coverage problem confirms that our sliding window technique clearly leads to better results for a wide range of instances and constraint settings.
\end{abstract}

\section{Introduction}

Many real-world optimization problems face diminishing returns when adding additional elements to a solution and can be formulated in terms of a submodular function~\cite{NemhauserWolsey78,DBLP:books/cu/p/0001G14}. Problems that can be stated in terms of a submodular function include classical combinatorial optimization problems such as the computation of a maximum coverage~\cite{DBLP:journals/ipl/KhullerMN99} or maximum cut~\cite{10.1145/227683.227684} in graphs as well as regression problems arising in machine learning.

Classical approximation algorithms for monotone submodular optimization problems under different types of constraints rely on greedy approaches which select elements with the largest benefit/cost gain according to the gain with respect to the given submodular function and the additional cost with respect to the given constraint~\cite{ZhangVorobeychickAAAI16,DBLP:books/cu/p/0001G14}.
During the last years, evolutionary multi-objective algorithms have been shown to provide the same theoretical worst case performance guarantees on 
solution quality as the greedy approaches while clearly outperforming classical greedy approximation algorithms in practice~\cite{FriedrichNeumannECJ15,DBLP:conf/nips/QianYZ15,QianSYTIJCAI17,DBLP:conf/ppsn/NeumannN20}. Results are obtained by means of rigorous runtime analysis (see \cite{DBLP:books/daglib/0025643,BookDoeNeu} for comprehensive overviews) which is a major tool for the analysis of evolutionary algorithms.

The analyzed approaches use a 2-objective formulation of the problem where the first objective is to maximize the given submodular function and the second objective is to minimize the cost of the given constraint.
The two objectives are optimized simultaneously using variants of the \gsemo algorithm~\cite{DBLP:journals/tec/LaumannsTZ04,1299908} and the algorithm outputs the feasible solution with the highest function value when terminating. Before \gsemo has been applied analyzed for submodular problems in the prescribed way, such a multi-objective set up has already been shown to be provably successful for other constrained single-objective optimization problems such as minimum spanning trees~\cite{DBLP:journals/nc/NeumannW06} and vertex and set covering problems~\cite{DBLP:journals/ec/FriedrichHHNW10,DBLP:journals/algorithmica/KratschN13} through different types of runtime analyses.

A crucial factor which significantly influences the runtime of these Pareto optimization approaches using \gsemo is the size of the population that the algorithm encounters.
This is in particular the case for problems where both objectives can potentially attain exponentially many different functions values. In the context of submodular optimization this is the case iff the function $f$ to be optimized as well as the constraint allow exponentially many trade-offs.
In order to produce a new solution, a solution chosen uniformly at random from the current population is mutated to produce an offspring. A large population size implies that often time is wasted by not selecting the right individual for mutation. 

\subsection{Our contribution}
In this paper, we present a sliding window approach for fast Pareto optimization called \swgsemo. This approach is inspired 
by the fact that several previous runtime
analyses of \gsemo for Pareto 
optimization (\eg, \cite{FriedrichNeumannECJ15, QianSYTIJCAI17}) 
consider improving steps 
of a certain kind only. More precisely,
they follow an inductive approach, 
\eg, 
where high-quality solutions 
having a cost value of~$i$ in the
constraint 
are mutated 
to high-quality solutions of 
constraint cost value~$i+1$, which 
we call a success. (More 
general sequences of larger increase 
per success 
are possible and will
 be considered in this paper.)
 Steps 
choosing individuals of other constraint 
cost 
values do not have any benefit for the analysis. Assuming that 
the successes increasing the 
quality and cost constraint value have 
the same success probability (or 
at least the same lower bound on it), 
the analysis essentially considers 
phases of uniform length where each phase 
must be concluded by a success 
for the next phase to start. 
More precisely, the runtime 
analysis typically allocates 
a window of~$t^*$ steps, for a certain 
number~$t^*$ depending on the 
success probability, and proves 
that a success is for within 
an expected number of~$t^*$ steps 
or is even highly likely in such
a phase. Nevertheless, although 
only steps choosing individuals of 
constraint value~$i$ are relevant 
for the phase, the classical GSEMO 
selects the individual 
to be mutated uniformly a random from the 
population,
leading to the ``wasted'' steps 
as mentioned above.

Our sliding window approach 
replaces the uniform selection 
with 
  a time-dependent window for selecting individuals  based on their constraint cost value. This time-dependent 
  window is of uniform length~$T$, which 
  is determined by the parameters of
  of the new algorithm, and 
  chooses individuals with constraint 
  cost value~$i$ for~$T$ steps each. 
  In our analysis, $T$ will be at least
  $t^*$, \ie, a proven length of 
  the phase allowing a success towards 
  a high-quality solution of the following constraint cost value 
  with high probability. 
   This allows the algorithm 
   to make time-dependent progress 
   and to focus the search 
   on solutions of a ``beneficial'' 
   cost constraint value, 
   which in the end results with high probability in the same approximation guarantee as the standard Pareto optimization approaches using 
   classical \gsemo. Our analysis points out that our proven runtime bound is independent of the maximum population size that the algorithm attains during its run. The provides significantly better upper bounds for our fast Pareto optimization approach compared to previous Pareto optimization approaches.

In our experimental study, we consider a wide range of social network graphs with different types of cost constraints and investigate the classical NP-hard maximum coverage problem. 
We consider uniform constraints where each node has a cost of $1$ as well as random constraints where the cost are chosen uniformly at random in the interval $[0.5, 1.5]$. Compared to previous studies, we examine settings that result in a much larger number of trade-offs as we investigate larger graphs in conjunction with larger constraint bound.
We point out that our sliding window selection provides significantly better results for various time budgets and constraint settings. Our analysis in terms of the resulting populations shows that the sliding window approach provides a much more focused Pareto optimization approach that produces a significantly larger number of trade-offs with respect to the constraint value and the considered goal function, in our case the maximum coverage score.

The outline of the paper is as follows. In Section~\ref{sec2}, we introduce the class of problems we consider in this paper. Section~\ref{sec3} presents our new \swgsemo algorithm. We provide our theoretical analysis of this algorithm in Section~\ref{sec4} and an experimental evaluation in Section~\ref{sec5}. Finally, we finish with some concluding remarks.

\section{Preliminaries}
\label{sec2}

In this section, we describe the problem classes and algorithms relevant for our study. 
Overall, the aim is to maximize pseudo-boolean fitness/objective 
functions $f\colon\{0,1\}^n\to \R$ under 
constraints on the allowed search points. An important class of such functions 
is given by the so-called \emph{submodular} functions \cite{NemhauserWolsey78}. We 
formulate it here on bit strings on length~$n$; in the literature, also an equivalent 
formulation using subsets of a ground set $V=\{v_1,\dots,v_n\}$ can be found. The notation 
$x\le y$ for two bit strings $x,y\in\{0,1\}^n$ means that $x$ is component-wise no larger than~$y$, 
\ie, $x_i\le y_i$ for all $i\in\{1,\dots,n\}$.

\begin{definition}
\label{def:submodular}
 A function $f\colon\{0,1\}^n\to\R^+$ is 
 called submodular if for all $x,y\in \{0,1\}^n$ where 
 $x\le y$ and all~$i$ where $x_i=y_i=0$
 it holds that 
 \[
f(x\oplus e_i)-f(x) \ge f(y\oplus e_i)-f(x)
 \]
 where $a\oplus e_i$ is the bit string 
 obtained by setting bit~$i$ of~$a$ to~$1$.
\end{definition}

We will also consider functions that are \emph{monotone} (either 
being submodular at the same time or without being submodular). A function 
$f\colon\{0,1\}^n\to \R$ is called monotone if for all $x,y\in\{0,1\}^n$
where $x\le y$ it holds that $f(x)\le f(y)$, \ie, setting bits to~$1$ 
without setting existing 1-bits to~$0$ will not decrease fitness.

Optimizing motonone functions becomes difficult if different constraints 
are introduced. Formally, there is a cost function $c\colon\{0,1\}^n\to\R$ and a budget~$B\in \R$ 
that the cost has to respect. Then the general optimization problem is defined as 
follows.

\begin{definition}[General Optimization Problem]
\label{def:genprob}
Given a monotone objective function $f\colon\{0,1\}^n\to \R$, a monotone 
cost function $c\colon\{0,1\}^n\to \R$ and a budget~$B\in \R$, 
find
\[\arg\max_{x\in\{0,1\}^n} f(x) \text{ such that }c(x)\le B.\]
\end{definition}

A constraint function~$c$ is called \emph{uniform} if it just counts 
the number of one-bits in~$x$, \ie, $c(x)=\sum_{i=1}^n x_i$.

In recent years, several variations of the problem in Definition~\ref{def:genprob} have been 
solved using multi-objective evolutionary algorithms like the classical GSEMO algorithm~\cite{FriedrichNeumannECJ15,DBLP:conf/nips/QianYZ15,QianSYTIJCAI17,DBLP:journals/ai/RoostapourNNF22}. 
In the following, we 
describe the basic concepts 
of multi-objective optimization relevant 
for our approach.


Let $x, y \in \{0,1\}^n$ be two search points. We say that $x$ (weakly) dominates y ($x \succeq y$) iff $f(x) \geq f(x)$ and $c(x) \leq c(y)$ holds.
We say $x$ strictly dominates $y$ ($x \succeq y$) iff $x \succeq y$ and $(f(x) \not = f(y) \vee c(x) \not = c(y))$. The dominance relationship applies in the same way to the objective vectors $(f(x), c(x))$ of the solutions. The set of non-dominated solutions is called the Pareto set and the set of non-dominated objectice vectors is called the Pareto front. 

The classical goal in multi-objective optimization is to compute for each Pareto optimal objective vector a corresponding solution. 
The approaches using Pareto optimization for constrained submodular problems
as investigated in, \eg,  \cite{FriedrichNeumannECJ15, QianSYTIJCAI17}, 
differ from this goal. Here the multi-objective approach is used to obtain a feasible solution, \ie, a solution for which $c(x) \leq B$ holds, that has the largest possible functions value $f(x)$.

\section{Sliding Window \gsemo}
\label{sec3}
The classical GSEMO algorithm~\cite{DBLP:journals/tec/LaumannsTZ04,1299908} (see Algorithm~\ref{alg:GSEMO}) has been widely used in the context of Pareto optimization. As done in~\cite{QianSYTIJCAI17}, we consider the variant starting with the search point $0^n$. This search point is crucial for its progress and the basis for obtaining theoretical performance guarantees.
\gsemo keeps for each non-dominated objective vector obtained during the optimization run exactly one solution in its current population $P$. In each iteration one solution $x \in P$ is chosen for mutation to produce an offspring $y$. The solution $y$ is accepted and included in the population if there is no solution $z$ in the current population that strictly dominates $y$. If $y$ is accepted, then all solutions that are (weakly) dominated by $y$ are removed from $P$.

We introduce the Sliding Window \gsemo (SW-GSEMO) algorithm given in Algorithm~\ref{alg:GSEMO-sliding}.
The algorithm differs from the classical \gsemo algorithm by selecting the parent $x$ that is used for mutation in a time dependent way with respect to its constraint value $c(x)$ (see the sliding-selection procedure given in Algorithm~\ref{alg:select-sliding}).
Let  $\temax$ be the total time that we allocate to the sliding window approach. 
Let $t$ the the current iteration number. If $t \leq  \temax$, the we select an individual of constraint value which matches the linear time progress from $0$ to constraint bound $B$, i.e. an individual with constraint value $\hat{c}=B/\temax$. As this value might not be integral,  we use the interval $[\lfloor \hat{c} \rfloor , \lceil \hat{c} \rceil ]$.
In the case that there is no such individual in the population, an individual is chosen uniformly at random from the whole population $P$ as done in the classical \gsemo algorithm.


 For mutation we analyze standard bit mutation. Here we create $y$ by flipping each bit $x_{i}$ of $x$ with probability $\frac{1}{n}$. As standard bit mutations have a probability of roughly $1/e$ of not flipping any bit in a mutation step, we use the standard bit mutation operator plus outlined in Algorithm~\ref{alg:mutation-plus} in our experimental studies. We note that all theorectical results obtained in this paper hold for standard bit mutations and standard bit mutatations plus.

As done in the area of runtime analysis, we measure the runtime of an algorithm buy the number of fitness evaluations. We analyze the Sliding Window \gsemo algorithm with respect to the number of fitness evaluations and determine values of $\temax$ for which the algorithm has obtained good approximation with high probability, i.e. with probability $1-o(1)$. The determined values of $\temax$ in our theorems in Section~\ref{sec4} are significantly lower than the bounds on the expected time to obtain the same approximations by the classical \gsemo algorithm.
\begin{algorithm}[t]
Set $x=0^n$\;
 $P\leftarrow \{x\}$\;
\Repeat{$\mathit{stop}$}{
Choose $x\in P$ uniformly at random\;
Create $y$ from $x$ by mutation\;
\If{$\nexists\, w \in P: w \succ y$} {
  $P \leftarrow (P \setminus \{z\in P \mid y \succeq z\}) \cup \{y\}$\;
  } }
\caption{Global simple evolutionary multi-objective optimizer (GSEMO)} \label{alg:GSEMO}
\end{algorithm}

\begin{algorithm}[t]
Set $x=0^n$\;
 $P\leftarrow \{x\}$\;
 $t \leftarrow 0$\;
\Repeat{$\mathit{t\geq \temax}$}{
$t \leftarrow t+1$\;
Choose $x=\text{sliding-selection}(P, t, \temax, B)$\;
Create $y$ from $x$ by mutation\;
\If{$\nexists\, w \in P: w \succ y$} {
  $P \leftarrow (P \setminus \{z\in P \mid y \succeq z\}) \cup \{y\}$\;
  } }
\caption{Sliding Window GSEMO (SW-GSEMO)} \label{alg:GSEMO-sliding}
\end{algorithm}

\begin{algorithm}[t]
\If{$t \leq \temax$}{
$\hat{c} \leftarrow (t/\temax)\cdot B$\;
$\hat{P} =\{x \in P \mid \lfloor \hat{c} \rfloor - \leq c(x) \leq \lceil \hat{c} \rceil\}$\;
\If{$\hat{P}=\emptyset$}{$\hat{P} \leftarrow P$\;}
}\Else{
$\hat{P} \leftarrow P$\;
}
Choose $x\in \hat{P}$ uniformly at random\;
Return $x$\;
\caption{sliding-selection$(P, t, \temax, B)$} \label{alg:select-sliding}
\end{algorithm}

\begin{algorithm}[t]
$y \leftarrow x$\;
\Repeat{$x \not =y$}{
Create $y$ from $x$ by flipping each bit $x_{i}$ of $x$ with probability $\frac{1}{n}$.
}
Return $y$\;
\caption{Standard-bit-mutation-plus(x)} \label{alg:mutation-plus}
\end{algorithm}

\section{Improved Runtime Bounds for SW-GSEMO}
\label{sec4}
Friedrich and Neumann~\cite{FriedrichNeumannECJ15} show that the classical GESMO 
finds a $(1-1/e)$-approximation to a monotone submodular function under a uniform 
constraint of size~$r$ in expected time $O(n^2(\log n + r))$. A factor~$n$ in their analysis stems 
from the fact that the population of GESMO can have size up to~$r\le n$. Thanks 
to the sliding window 
approach, which only selects 
from a certain subset of the population, this factor~$n$ 
does not appear in the following bound that we prove for \slidinggsemo. More precisely, the 
runtime guarantee in the following 
theorem~\ref{theo:sliding-uniform} is by a factor of $\Theta(n/\!\log n)$ better 
if $r\ge \log$. For smaller~$r$, we 
gain a factor of at least~$\Theta(n/r)$.

 \begin{theorem}
 \label{theo:sliding-uniform}
 Consider the \slidinggsemo with $\temax=4ern\ln n$ 
 on a monotone submodular function~$f$ under a uniform constraint of size~$r$. Then 
 with probability $1-o(1)$, the time until a $(1-1/e)$-approximation has been found is bounded 
 from above by $\temax = O(nr\log n)$.
 \end{theorem}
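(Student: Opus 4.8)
The plan is to combine the classical greedy analysis of monotone submodular maximization with a phase-based argument that exploits the sliding window to remove the population-size factor. The backbone is the standard greedy improvement lemma: for any $x$ with cost $c(x)=j\le r$ there is a single $0$-bit of $x$ whose flip produces $x'$ with $c(x')=j+1$ and
\[
\OPT - f(x') \le (1-1/r)\bigl(\OPT - f(x)\bigr),
\]
where $\OPT=\max_{c(z)\le r} f(z)$. Writing $x^*$ for an optimal solution, monotonicity and submodularity give $\OPT - f(x) \le \sum_{i:\,x^*_i=1,\,x_i=0}\bigl(f(x\oplus e_i)-f(x)\bigr)\le r\max_i\bigl(f(x\oplus e_i)-f(x)\bigr)$, so the best single flip closes a $1/r$ fraction of the gap. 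Iterating from $0^n$, a solution obtained by $k$ such steps satisfies $f \ge \bigl(1-(1-1/r)^k\bigr)\OPT$, and reaching cost $r$ yields the desired $(1-1/e)$-approximation.

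First I would partition the run into $r$ phases, where phase $i$ ($i=0,\dots,r-1$) consists of the iterations $t\in[i\cdot T,(i+1)\cdot T)$ with $T=\temax/r=4en\ln n$. Since $B=r$ for a uniform constraint, during phase $i$ we have $\hat c=(t/\temax)\,r\in[i,i+1)$, so the sliding-selection window contains exactly the cost values in $\{i,i+1\}$. The inductive invariant is that at the start of phase $i$ the population contains a solution of cost $i$ with value at least $\bigl(1-(1-1/r)^i\bigr)\OPT$; the base case holds because $f(0^n)\ge 0$. The crucial gain over classical \gsemo lies in bounding $|\hat P|$: for a uniform constraint the population keeps at most one individual per cost value (two individuals of equal cost are comparable, the worse being strictly dominated), and the window spans at most two consecutive cost values, hence $|\hat P|\le 2$. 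Thus the window selects the relevant cost-$i$ parent with probability at least $1/2$, independently of the total population size.

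Conditioned on selecting that parent, standard bit mutation performs the single greedy flip with probability $\frac1n(1-\frac1n)^{n-1}\ge\frac1{en}$ (the same lower bound holds for the mutation-plus operator), so each step of phase $i$ is a success, creating a cost-$(i+1)$ solution meeting the next threshold, with probability at least $\frac1{2en}$. The probability that phase $i$ contains no success is therefore at most $\bigl(1-\frac1{2en}\bigr)^{T}\le e^{-T/(2en)}=e^{-2\ln n}=n^{-2}$. A union bound over the $r\le n$ phases shows that with probability $1-rn^{-2}=1-o(1)$ every phase succeeds; in that case the invariant propagates to cost $r$, giving a feasible solution of value at least $\bigl(1-(1-1/r)^r\bigr)\OPT\ge(1-1/e)\OPT$ within $\temax=rT=O(nr\log n)$ iterations.

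The main obstacle is justifying that the invariant is actually maintained across phases, i.e.\ that the window always contains a parent meeting the current threshold. A cost-$i$ solution on the greedy path can be removed from $P$ only by a solution that weakly dominates it; if the dominating solution again has cost $i$ this merely improves the threshold value, but if it has cost $j<i$ the cost-$i$ slot may be vacated. I would handle this by tracking the monotone quantity $J_{\max}$, the largest $k$ for which $P$ contains a solution of cost at most $k$ with value at least $\bigl(1-(1-1/r)^k\bigr)\OPT$, observing that $J_{\max}$ never decreases (domination only replaces a solution by a Pareto-at-least-as-good one) and that a low-cost solution beating the greedy curve only propels the search \emph{ahead} of the sliding window rather than stalling it, so the parent realizing $J_{\max}$ stays within $\{i,i+1\}$ during phase $i$. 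Making this synchronization precise, together with the boundary cases where $\hat c$ is integral and where $\hat P$ momentarily falls back to the whole population, is the delicate part; the probabilistic estimates above are then routine.
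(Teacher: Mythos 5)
Your proof follows essentially the same route as the paper's: the same greedy thresholds $(1-(1-1/r)^j)f(\OPT)$, the same partition of the $\temax=4ern\ln n$ iterations into $r$ phases of length $4en\ln n$ during which the window covers the two cost values $\{i,i+1\}$, the same bound $|\hat P|\le 2$, the same per-step success probability $1/(2en)$, and the same $n^{-2}$-per-phase failure bound followed by a union bound. The synchronization subtlety you flag at the end (a threshold-meeting solution being replaced by a dominating one of strictly smaller cost and thereby leaving the window) is a real point, but the paper's proof does not treat it either; it simply assumes the solution $x_j$ has cost exactly $j$ and remains available for selection throughout phase $j$.
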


\begin{proof}
    We follow the proof of Theorem~2 in \cite{FriedrichNeumannECJ15}. As in that work, the aim 
    is to include for every $j\in\{0,\dots,r\}$ an element $x_j$ in the population such that 
    $$f(x_j)\ge (1-(1-1/r)^j)f(\OPT),$$ 
    where $\OPT$ is an optimal solution. 
    If this holds, then the element $x_r$ satisfies the 
    desired approximation ratio. We also know from \cite{FriedrichNeumannECJ15} that 
    the probability of mutating $x_j$ to $x_{j+1}$ is at least~$1/(en)$ since 
    it is sufficient to insert the element yielding the largest increase of~$f$ and not 
    to flip the rest of the bits.

    We now consider a sequence of events leading to the inclusion of elements~$x_j$ in the population 
    for growing~$j$. By definition of \slidinggsemo, element $x_0$ is in the population at 
    time~$0$. Assume that  
    element $x_j$, where $j\in\{0,\dots,r-1\}$, is in the population~$P$ at time $\tau_j \coloneqq 4ejn\ln n$. Then, by definition of 
    the set $\hat{P}$, it is available for selection 
    up to time $$\tau_{j+1}-1 = 4e(j+1)n\ln n-1$$ since 
    $$\lfloor ((4e(j+1)n\ln n-1)/\temax)\cdot r \rfloor= j.$$ 
    
    Moreover, the size of the subset population $\hat{P}$ that 
    the algorithm selects from is bounded from above by $2$ since $\lceil\hat{c}\rceil - \lfloor\hat{c}\rfloor \le 1$ 
    and there is 
    at most one non-dominated element for every constraint value. Therefore, the probability of choosing $x_j$ and mutating it 
    to $x_{j+1}$ is at least $1/(2en)$ from time~$\tau_j$ until time $\tau_{j+1}-1$, \ie, for a period 
    of $4en\ln n$ steps, and the probability 
    of this not happening is at most 
    $$(1-1/(2en))^{4en\ln n}\le 1/n^2.$$ By a union bound over the $r$ elements to be included, 
    the probability that there is a $j\in\{1,\dots,r\}$ such that 
    $x_j$ is not included by time $\tau_j$ is $O(1/n)$.
\end{proof}

 \subsection{General Cost Constraints}

 We will show that the sliding window approach can also be used 
 for more general optimization problems
 than submodular functions 
 and also leads to improved 
 runtime guarantees there. Specifically, 
 we extend the approach to 
 cover  general cost constraints 
similar to the scenario studied in 
 \cite{QianSYTIJCAI17}. They use 
 GSEMO (named POMC in their work) 
 for the optimization 
 scenario described in Definition~\ref{def:genprob}, \ie, 
  a monotone 
  objective function 
  $f\colon \{0,1\}^n\to \R^+$
  under the constraint that 
  a monotone cost function 
  $c\colon \{0,1\}^n \to \R^+$ 
  respects 
  a cost bound~$B$.

 When transferring the set-up 
 of \cite{QianSYTIJCAI17}, we 
 have introduce the following 
 restriction: the image set 
 of the cost function must be 
 the positive integers, \ie, 
 $c\colon \{0,1\}^n \to \N$. 
 Otherwise, all cost values could 
 be in a narrow real-valued interval 
 so that the sliding window 
 approach would not necessarily have 
 any effect.

 To formulate our result, we define 
 the submodularity ratio in the same 
 way as in earlier work (\eg, \cite{QianSYTIJCAI17}).

 \begin{definition}
 The submodularity ratio of 
 a function $f\colon\{0,1\}^n\to\R^+$ 
 with respect to a constraint 
 $c\colon\{0,1\}^n\to\N$ 
 is defined as
 \[
 \alpha_f\coloneqq \min_{\substack{x,y\in\{0,1\}^n, i\in\{1,\dots,n\}\\x\le y \wedge x_i=y_i=0}} \frac{f(x\oplus e_i)-f(x)}{c(y\oplus e_i)-c(y)},
 \]
 where $a\oplus e_i$ is the bit string 
 obtained by setting bit~$i$ of~$a$ to~$1$.\end{definition}

 The approximation guarantee proved 
 for GSEMO in \cite{QianSYTIJCAI17} 
 depends on $\alpha_f$ and the following 
 quantity. 

 \begin{definition}
     The minimum marginal gain of 
     a function $c \colon \{0,1\}^n\to \N$  
     is defined as
     \[\delta_{c} = 
     \min_{
     x\in\{0,1\}^n, i\in\{1,\dots,n\}, x_i=0} c(x\oplus e_i) - c(x)
     \]
 \end{definition}
 
In our analysis, we shall follow the assumption from \cite{QianSYTIJCAI17} 
that $\delta_c>0$, \ie, adding an element 
to the solution will always increase the cost value.
 
Theorem~2 in \cite{QianSYTIJCAI17} depends on the submodularity ratio, 
minimum marginal gain and the 
maximum population size $P_{\max}$ 
reached by a run of GSEMO on the 
bi-objective function maximizing 
$f$ and minimizing a variant~$\hat{c}$
of~$c$ (explained below).  
It states 
that within $O(en B P_{\max} /\delta_{\hat{c}})$ iterations, GESMO
finds a solution $x$ such that 
$f(x) \ge \frac{\alpha_f}{2} (1-1/e^{\alpha_f}) \cdot f(\hat{x})$, 
where $\hat{x}$ is an optimal solution 
when maximizing $f$ with the original
cost function~$c$ but under a slightly 
increased budget with 
respect to~$B$ (precisely 
defined in Equation~(4) in \cite{QianSYTIJCAI17} and further detailed 
in \cite{ZhangVorobeychickAAAI16}). Our main 
result, formulated 
in the following theorem, 
is that the \slidinggsemo 
obtains solutions with the same quality guarantee in an 
expected time where the $P_{\max}$ 
factor does not appear. As a detail in the formulation, the 
algorithm can be run with an approximation $\hat{c}$ of the original 
cost function $c$ that is by a certain factor $\phi(n)$ larger (\ie, 
$c(x)\le \hat{c}(x)\le \phi(n)c(x)$, see 
\cite{ZhangVorobeychickAAAI16} for details).

\begin{theorem}
\label{theo:general}
    Consider the problem of maximizing 
    a monotone function $f\colon\{0,1\}^n\to \R^+$ under 
    a monotone approximate cost function~$\hat{c}\colon\{0,1\}^n\to \N^+$ with 
    constraint~$B$ 
    and apply \slidinggsemo to the
    objective function $(f_1(x),f_2(x)$), where $f_2(x)=-\hat{c}(x)$ and \[f_1(x)=\begin{cases}
    -\infty & \text{ if $\hat{c}(x) > B$} \\f(x) & \text{otherwise}\end {cases}.\] If 
    $\temax=2en (B/\delta_{\hat{c}})\ln(n+B/\delta_{\hat{c}})$, then 
    with probability 
    at least~$1-o(1)$ a solution
    of quality at least 
    $\frac{\alpha_f}{2} (1-1/e^{\alpha_f}) \cdot f(\hat{x})$ is 
    found in $\temax$ iterations, 
    with $\delta_{\hat{c}}>0$, $\alpha_f$ and 
    $\hat{x}$ as defined above.
\end{theorem}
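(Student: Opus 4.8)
The plan is to transfer the greedy-chain argument underlying Theorem~2 of \cite{QianSYTIJCAI17} into the sliding-window framework, in exactly the same way that the proof of Theorem~\ref{theo:sliding-uniform} transfers the analysis of \cite{FriedrichNeumannECJ15}. First I would import from \cite{QianSYTIJCAI17} the existence of a sequence of search points $x^{(0)}=0^n, x^{(1)},\dots,x^{(m)}$ with strictly increasing cost values $c_0<c_1<\dots<c_m\le B$, where each $x^{(i+1)}$ arises from $x^{(i)}$ by inserting a single (best-ratio) element and where the final point $x^{(m)}$ satisfies $f(x^{(m)})\ge \frac{\alpha_f}{2}(1-1/e^{\alpha_f})\,f(\hat{x})$. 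Since $\hat{c}$ is integer-valued with minimum marginal gain $\delta_{\hat{c}}>0$, each step raises the cost by at least $\delta_{\hat{c}}$, so the chain has at most $m\le B/\delta_{\hat{c}}$ steps. As in \cite{FriedrichNeumannECJ15}, the probability that one mutation of $x^{(i)}$ produces $x^{(i+1)}$ is at least $1/(en)$, because it suffices to flip in the single required bit and leave the remaining bits unchanged.

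Next I would set up the inductive window-tracking argument. In the bi-objective formulation $(f_1,f_2)$ every infeasible point is strictly dominated by $0^n$, so the population $P$ contains at most one feasible individual per integer cost value; consequently the selection set $\hat{P}$, restricted to the two consecutive cost values $\lfloor\hat{c}\rfloor$ and $\lceil\hat{c}\rceil$, always satisfies $|\hat{P}|\le 2$. The induction hypothesis is that $x^{(i)}$ has entered $P$ before the sliding window first reaches cost value $c_i$; since $x^{(i)}$ lies on the greedy chain it is non-dominated and therefore remains in $P$ until a strictly better solution of the same cost appears, so it is available for selection throughout the block of iterations in which the window covers $c_i$. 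Combining $|\hat{P}|\le 2$ with the mutation bound yields a success probability of at least $1/(2en)$ in every such iteration.

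I would then translate window position into iteration counts. The window covers a fixed cost value $c_i$ while $\hat{c}=(t/\temax)B$ lies in $[c_i-1,c_i+1)$, i.e.\ for a block of $\Theta(\temax/B)$ consecutive iterations. Plugging in $\temax=2en(B/\delta_{\hat{c}})\ln(n+B/\delta_{\hat{c}})$ makes this block long enough that the probability of \emph{not} producing $x^{(i+1)}$ from $x^{(i)}$ during it is at most $(1-1/(2en))^{\Theta(en\ln(n+B/\delta_{\hat{c}}))}=(n+B/\delta_{\hat{c}})^{-\Omega(1)}$. A union bound over the at most $B/\delta_{\hat{c}}$ chain steps then shows that, with probability $1-o(1)$, every $x^{(i)}$ — and in particular the approximating point $x^{(m)}$ — is constructed within the first $\temax$ iterations, which gives the claimed quality guarantee.

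The step I expect to be the main obstacle is the bookkeeping that links the \emph{linearly} moving window to the \emph{non-uniformly spaced} chain costs $c_i$. Unlike the uniform case of Theorem~\ref{theo:sliding-uniform}, where $c_i=i$ and the window lower bound advances in lockstep with the chain, here the gaps $c_{i+1}-c_i$ can be $\delta_{\hat{c}}$ or larger, so the window spends ``dead'' iterations at cost values hosting no chain element; one must verify both that each $x^{(i)}$ survives these dead phases inside $P$ (being removed only if a dominating solution of equal cost appears, which only helps the approximation) and that the per-phase dwell time $\Theta(\temax/B)$ still keeps the union bound over all $\Theta(B/\delta_{\hat{c}})$ phases at $o(1)$. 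Controlling the $\delta_{\hat{c}}$-dependence of the per-phase failure probability in this final estimate — which is cleanest when $\delta_{\hat{c}}$ is small relative to the available block length — is the delicate part of the argument.
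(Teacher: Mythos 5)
Your overall strategy coincides with the paper's: adapt the inductive argument behind Theorem~2 of Qian et al., replace the $1/P_{\max}$ selection probability by the $1/\lvert\hat{P}\rvert\ge 1/2$ bound coming from the sliding window, give each stage a dwell time of order $en\ln(n+B/\delta_{\hat{c}})$, and conclude with a union bound over the at most $B/\delta_{\hat{c}}$ required successes. The quantitative skeleton (success probability $1/(2en)$ per step, per-phase failure $(1-1/(2en))^{\Theta(en\ln(n+B/\delta_{\hat{c}}))}$, union bound) is the paper's.

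The genuine gap is your formulation of the induction via a \emph{fixed} greedy chain $x^{(0)},\dots,x^{(m)}$ of literal search points, each obtained from its predecessor by inserting one element. In the general-cost setting this does not survive the population dynamics: a chain point $x^{(i)}$ is not guaranteed to stay in $P$ --- it is removed as soon as any weakly dominating solution $y$ with $\hat{c}(y)\le\hat{c}(x^{(i)})$ and $f(y)\ge f(x^{(i)})$ (not necessarily of the same cost) enters the population, and after such a replacement the ``single required bit'' is no longer defined relative to anything present in $P$. Your remark that removal ``only helps the approximation'' is the right intuition but not a proof step. The paper closes this by tracking the invariant $J_{\max}=\max\{j\le B-1\mid \exists x\in P\colon \hat{c}(x)\le j \wedge f(x)\ge (1-(1-\alpha_f j/(Bk))^k)f(\hat{x}) \text{ for some }k\}$ and invoking the fact that \emph{any} current witness of $J_{\max}=i$ admits a zero-bit whose flip raises $J_{\max}$ by at least $\delta_{\hat{c}}$; since $J_{\max}$ is monotone under GSEMO's replacement rule, the induction survives arbitrary exchanges of witnesses. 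You should restate your induction in terms of this potential rather than in terms of named search points. A secondary issue is the $\delta_{\hat{c}}$-bookkeeping you yourself flag: the window dwells only $\temax/B=2en\ln(n+B/\delta_{\hat{c}})/\delta_{\hat{c}}$ iterations on each \emph{integer} cost value, so for $\delta_{\hat{c}}>1$ the exposure of a given witness is a factor $\delta_{\hat{c}}$ shorter than the exponent $\Theta(en\ln(n+B/\delta_{\hat{c}}))$ you use; the paper's accounting allots a full $2en\ln(n+B/\delta_{\hat{c}})$ steps per witness (effectively the $\delta_{\hat{c}}=1$ case), and you need to either make that assumption explicit or rescale the phase lengths so that the union bound over all $B/\delta_{\hat{c}}$ phases still yields $o(1)$.
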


\begin{proof}
    We follow the proof of Theorem~2 in \cite{QianSYTIJCAI17} and adapt 
    it in a similar way to \slidinggsemo as we did in the proof of 
    Theorem~\ref{theo:sliding-uniform} above. According to the analysis 
    in \cite{QianSYTIJCAI17}, the approximation result is achieved 
    by a sequence of steps choosing an individual of cost value at most~$j$, 
    where $j\in\{0,\dots,B-1\}$ (here we adapted the proof to the integrality 
    of $\hat{c}$) and flipping a zero-bit to~$1$ yielding a certain minimum 
    increase of~$f$. More precisely, denoting by~$P$ the current population 
    of \slidinggsemo, we analyze the development of the 
    quantity
    \newcommand{\jmax}{J_{\max}}
    \begin{align*}
    \jmax =  \max\{j & \le B-1 \mid  \,\exists x\in P\colon 
    \hat{c}(x)\le j \\
    & \,\wedge\, f(x) \ge (1-(1- \tfrac{\alpha_f j}{Bk})^k)\cdot f(\hat{x}) 
    \text{\, for some~$k$}\}.
    \end{align*}
    Clearly, the initial value of~$\jmax$ is~$0$. According to the analysis 
    in~\cite{QianSYTIJCAI17}, if currently $\jmax=i<B$, then there is either  
    a solution~$x_i^*$ of cost $\hat{c}(x^*_i)\le i$ (note that the inequality 
    can be strict) and a zero-bit~$z_i^*$ in~$x$ whose 
    flip leads to $\jmax\ge i+\delta_{\hat{c}}$ (hereinafter called a successful 
    step); or there is already 
    a solution in the population satisfying the desired quality guarantee 
    $\frac{\alpha_f}{2} (1-1/e^{\alpha_f}) \cdot f(\hat{x})$.

    We now show that $x_i^*$ is chosen and bit~$z_i^*$ is flipped with high probability for each of at most $B/\delta_{\hat{c}}$ values that~$i$ can take
    in this sequence of successful steps. To this end, note that by definition, $\hat{c}(x_i^*)$ is increasing with respect to~$i$. Furthermore, by 
    definition of the set $\hat{P}$ in 
    \slidinggsemo, element $x_i^*$ from the population is available for selection 
    for a period of $$2en \ln(n+B/\delta_{\hat{c}})$$ steps, more precisely between time 
    $$2en \hat{c}(x_i^*) \ln(n+B/\delta_{\hat{c}})$$ and $$2en (\hat{c}(x_i^*)+1) \ln(n+B/\delta_{\hat{c}})-1.$$ 
    Moreover, 
    by the same arguments as in the proof of Theorem~\ref{theo:sliding-uniform}, 
    it holds that $\lvert \hat{P} \rvert\le 2$ for the subset population 
    $\hat{P}$ that 
    the algorithm selects from, so the probability of a success is at least 
    $1/(2en)$ for each stop within the mentioned period. Therefore, the probability of not having a
    successful step with respect to $x_i^*$ is bounded from above 
    by 
    $$(1-1/(2en))^{2en \ln(n+B/\delta_{\hat{c}})}\le \tfrac{1}{n(B/\delta_{\hat{c}}) }.$$ 
    By a union bound over the at most 
    $B/\delta_{\hat{c}}$ required successes, the probability of missing at least 
    one success is at most $1/n$, so altogether the desired solution quality 
    is achieved with probability at least $1-1/n=1-o(1)$.
\end{proof}

The results from Theorems~\ref{theo:sliding-uniform} and 
\ref{theo:general} are just two 
examples of a runtime result following 
an inductive sequence of improving steps 
based on constraint cost value. In 
the literature, there are further 
analyses of \gsemo
following a similar approach (\eg, \cite{QianLFTTCS23}),
which we believe can be transferred 
to our sliding window approach to 
yield improved runtime guarantees.

\section{Experimental investigations}
\label{sec5}

We now carry out experimental investigations of the new Sliding Window \gsemo approach and compare it against the standard \gsemo which has been used in many theoretical and experimental studies on submodular optimization.

\subsection{Experimental setup}
We consider the maximum coverage problem in graphs which is one of the best known NP-hard submmodular combinatorial optimization problems. Given an undirected graph $G=(V,E)$ with $n=|V|$ nodes, we denote by $N(v)$ the set of nodes containing $v$  and its neighbours in $G$. Each node $v$ has a cost $c(v)$ and the goal is to select a set of nodes indicated $x$ such that the number of nodes covered by~$x$ given as
\[\text{Coverage}(x) = \left|\bigcup_{i=1}^n N(v_i) x_i \right|
\]is maximized under the condition that the cost of the selected nodes does not exceed a given bound B, \ie
\[\sum_{i=1}^n c(v_i) x_i \leq B
\]
holds.

Previous studies investigated \gsemo either considered relatively small graphs or small budgets on the constraints that only allowed a small number of trade-offs to be produced. 
We consider sparse graphs from the from the network data repository~\cite{nr} as dense graphs are usually easy to cover with just a small number of nodes and do not pose a challenge when considering the maximum coverage problem.
We use the graphs ca-CSphd, ca-GrQc, Erdos992, ca-HepPh, ca-AstroPh, ca-CondMat, which consist of $1882$,  $4158$,  $6100$, $11204$, $17903$, $21363$ nodes, respectively.
Note, that especially the large graphs with more than $10000$ nodes are pushing the limits for both algorithms when considering the given fitness evaluation budgets.

To match our theoretical investigations for the uniform constraint, we consider the \emph{uniform} setting where $c(v_i)=1$ holds for any node~$v_i$ in the given graph. Note that the uniform setting implies that the number of trade-offs in the two objectives is at most $B+1$.
In order to investigate the behaviour of the \gsemo approaches when there are more possible trade-offs, we investigate for each graph the following random setting. For an instance in the \emph{random} setting, the cost of each node $v_i$ is chosen independently of the other uniformly at random in the interval $[0.5, 1.5]$. Note that the expected cost of each node in the random setting is $1$. This set up allows us to work with the same bounds for the uniform and random setting. We use 
$$B= \log_2 n, \sqrt{n}, \lfloor n/20 \rfloor, \lfloor n/10 \rfloor$$
such that the bounds scale with the given number of nodes in the considered graphs. Note that for the uniform case, all costs are integers and the effective bounds are the stated bounds rounded down to the next smaller integer. We consider the performance of both algorithms when given them a budget of
$$\temax=100000, 500000, 1000000$$ 
fitness evaluations.
Note, that especially for the large graphs and for the larger values of $B$ this is this pushing the limits as the fitness evaluations budgets are much less then what is stated in the upper bounds on $\temax$ in Theorems~\ref{theo:sliding-uniform} and~\ref{theo:general}. For every setting, we carry out 30 independent runs. We show the coverage values obtained in Table~\ref{tab:combined}. In this table, we report the mean, standard deviation and the $p$-value (with $3$ decimal places) obtained by the Mann Whitney\nobreakdash-U test. We call a result
statistically significant if the $p$-value is at most $0.05$. To gain additional insights, we present the mean final population sizes among the 30 runs for each setting in Table~\ref{tab:combinedpop}.

\begin{figure}[t]
\includegraphics[scale=0.5]{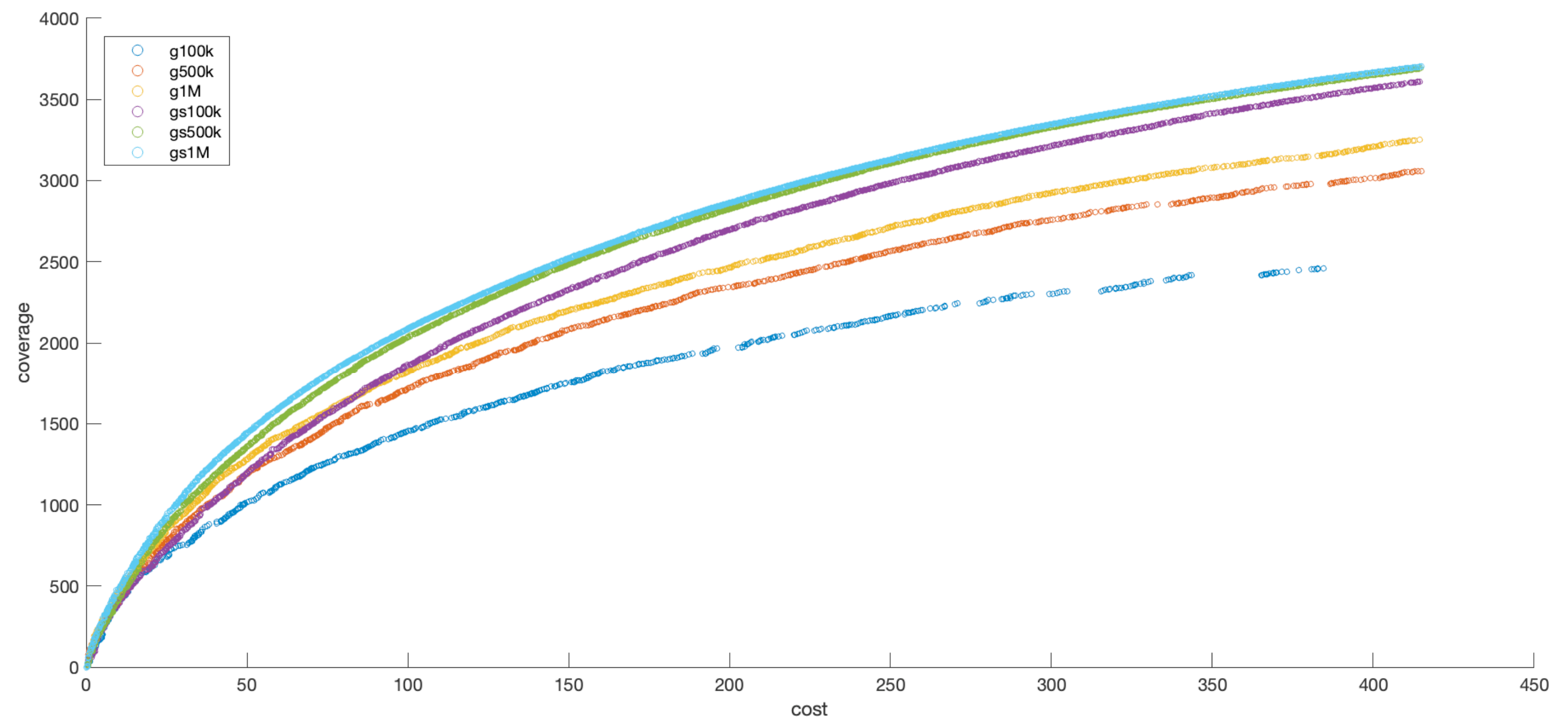}

\caption{Example final trade-offs for the graph ca-GrQc (4158 nodes) with randomly chosen costs and budget\\ $B=415$ obtained by \gsemo(g) and Sliding Window \gsemo(gs) for runs with $100k$, $500k$, and $1M$ iterations.}
\label{fig:ca-GrQc}
\end{figure}

\subsection{Experimental results}
The results for all considered graphs, constraint bounds, fitness evaluation budgets in the uniform and random cost setting are given in Table~\ref{tab:combined}. The best results are highlighted in bold.
Overall, \swgsemo is clearly outperforming \gsemo for almost all settings and almost all results are statistically significant. The only instances where both algorithms perform equally good are 
when considering the small constraint bounds of $\log_2 n$ or $\sqrt{n}$ for ca-CSphd and $\log_2 n$ for ca-GrQc, Erdos992, ca-HepPh. An important observation is that \swgsemo with only $100{,}000$ iterations is already much better than \gsemo with $1{,}000{,}000$ fitness evaluations if the constraint bound is not too small.

Considering the larger constraint bounds $\lfloor n/20 \rfloor, \lfloor n/10 \rfloor$ for the graphs, we can see that \swgsemo is significantly outperforming \gsemo. The difference in terms of coverage values between the two algorithms becomes even more pronounced when considering the random instances compared to the uniform ones. Considering the graph ca-HepPh, we can see that standard \gsemo obtains better results than \swgsemo when considering the constraint bound of $\log_2 n$ in the uniform setting while \swgsemo is significantly outperforming \gsemo in all other settings. Looking at the results for the two largest graphs ca-AstroPh and ca-CondMat, we can see that \swgsemo significantly outperforms \gsemo in all considered settings. 

Comparing the uniform and the random setting, we can see that the coverage values obtained by \gsemo in the random setting are much smaller than in the uniform setting when considering the three larger graphs ca-HepPh, ca-AstroPh, ca-CondMat. The only exception are some results for the small bound $B=\log_2 n$. 
We attribute this deterioration of \gsemo to the larger number of trade-offs per cost unit encountered in the optimization process which significantly slows down \gsemo making progress towards the constraint bound. In contrast to this, the coverage values obtained by \swgsemo in the uniform and random setting when considering $\temax=500000, 1000000$ are quite similar and sometimes higher for the random setting. This indicates that the sliding window selection keeps steady progress for the random setting and is not negatively impacted by the larger number of trade-offs in the random setting.

In order to better understand the performance difference between the 
algorithms, we examine the trade-offs produced by the algorithms.
Figure~\ref{fig:ca-GrQc} illustrates the final set of trade-offs for \gsemo and \swgsemo with respect to cost and coverage values for the different number of fitness evaluations considered. 
The three lower trade-off fronts depicted in blue, red, and yellow are obtaining running \gsemo with 100,000, 500,000, and 1,000,000 iterations. The three higher trade-off fronts shown in purple, green, and light blue have been obtained by \swgsemo in 100000, 500000, and 1000000 iterations.
It can be observed that the fronts obtained by \swgsemo are significantly better than the ones obtained by \gsemo. The fronts for \swgsemo with 500000 and 1000000 iterations are very similar while the results for $100000$ are already better than the ones obtained by \gsemo with 1000000 iterations. This matches the behaviour that can already be observed for many results shown in Table~\ref{tab:combined}. Furthermore, it can be seen that \gsemo has already difficulties obtaining a solution with cost close to the constraint bound when using the smallest considered budget of $100000$ fitness evaluations. 

\begin{table*}[htbp]
\scriptsize
    \centering
    \begin{tabular}{|c|c|c||c|c|c|c|c||c|c|c|c|c|} \hline 
       & & & \multicolumn{5}{|c||}{\bfseries Uniform} & \multicolumn{5}{|c|}{\bfseries Random}\\ \hline
 & & &      \multicolumn{2}{|c}{\bfseries \gsemo } & \multicolumn{2}{|c|}{\bfseries \swgsemo} & & \multicolumn{2}{|c|}{\bfseries \gsemo} & \multicolumn{2}{c|}{\bfseries \swgsemo} & \\ 
 Graph & $B$ & $t_{\max}$ &     Mean & Std & Mean & Std & $p$-value & Mean & Std & Mean & Std & $p$-value  \\
\hline
  \multirow{12}{*}{ca-CSphd} &  10 &  100000 &  \textbf{222} & 0.183 & \textbf{222} & 0.000 & 0.824 & 244 & 12.904 & \textbf{254} & 13.117 & 0.006\\
 &  10 &  500000 &  \textbf{222} & 0.000 & \textbf{222} & 0.000 & 1.000 & \textbf{257} & 13.962 & \textbf{257} & 13.672 & 0.971\\
 &  10 &  1000000 &  \textbf{222} & 0.000 & \textbf{222} & 0.000 & 1.000 & \textbf{258} & 13.938 & 257 & 13.878 & 0.894\\
 &  43 &  100000 &  568 & 5.756 & \textbf{599} & 0.730 & 0.000 & 539 & 13.808 & \textbf{625} & 13.711 & 0.000\\
 &  43 &  500000 &  \textbf{600} & 0.254 & \textbf{600} & 0.000 & 0.657 & 615 & 13.150 & \textbf{629} & 13.485 & 0.000\\
 &  43 &  1000000 &  \textbf{600} & 0.000 & \textbf{600} & 0.000 & 1.000 & 626 & 13.588 & \textbf{630} & 13.688 & 0.234\\
 &  94 &  100000 &  823 & 6.150 & \textbf{928} & 0.430 & 0.000 & 779 & 12.898 & \textbf{957} & 12.038 & 0.000\\
 &  94 &  500000 &  924 & 1.570 & \textbf{928} & 0.000 & 0.000 & 915 & 12.252 & \textbf{962} & 12.159 & 0.000\\
 &  94 &  1000000 &  \textbf{928} & 0.254 & \textbf{928} & 0.000 & 0.657 & 946 & 11.660 & \textbf{963} & 12.173 & 0.000\\
 &  188 &  100000 &  1087 & 11.676 & \textbf{1280} & 0.814 & 0.000 & 1036 & 12.868 & \textbf{1334} & 12.709 & 0.000\\
 &  188 &  500000 &  1256 & 2.809 & \textbf{1280} & 0.000 & 0.000 & 1235 & 12.964 & \textbf{1341} & 12.718 & 0.000\\
 &  188 &  1000000 &  1278 & 1.119 & \textbf{1280} & 0.254 & 0.000 & 1291 & 11.671 & \textbf{1341} & 12.822 & 0.000\\
\hline
 \multirow{12}{*}{ca-GrQc} &  12 &  100000 &  490 & 8.798 & \textbf{505} & 5.701 & 0.000 & 535 & 17.964 & \textbf{595} & 21.765 & 0.000\\
 &  12 &  500000 &  509 & 2.539 & \textbf{510} & 0.000 & 0.046 & 601 & 23.067 & \textbf{620} & 20.341 & 0.003\\
 &  12 &  1000000 &  \textbf{510} & 0.000 & \textbf{510} & 0.000 & 1.000 & 615 & 22.523 & \textbf{623} & 22.192 & 0.181\\
 &  64 &  100000 &  1320 & 15.662 & \textbf{1511} & 5.488 & 0.000 & 1281 & 24.326 & \textbf{1636} & 25.545 & 0.000\\
 &  64 &  500000 &  1490 & 8.904 & \textbf{1529} & 3.319 & 0.000 & 1516 & 25.832 & \textbf{1692} & 25.286 & 0.000\\
 &  64 &  1000000 &  1512 & 6.244 & \textbf{1529} & 2.087 & 0.000 & 1594 & 24.659 & \textbf{1699} & 26.351 & 0.000\\
 &  207 &  100000 &  2151 & 20.651 & \textbf{2748} & 10.797 & 0.000 & 2044 & 25.039 & \textbf{2840} & 23.631 & 0.000\\
 &  207 &  500000 &  2530 & 13.505 & \textbf{2775} & 3.937 & 0.000 & 2450 & 18.991 & \textbf{2918} & 18.449 & 0.000\\
 &  207 &  1000000 &  2655 & 9.295 & \textbf{2778} & 2.687 & 0.000 & 2600 & 13.956 & \textbf{2926} & 20.322 & 0.000\\
 &  415 &  100000 &  2704 & 24.887 & \textbf{3571} & 5.661 & 0.000 & 2407 & 51.554 & \textbf{3622} & 15.866 & 0.000\\
 &  415 &  500000 &  3171 & 13.424 & \textbf{3616} & 3.336 & 0.000 & 3047 & 19.085 & \textbf{3701} & 14.015 & 0.000\\
 &  415 &  1000000 &  3339 & 10.258 & \textbf{3622} & 3.468 & 0.000 & 3232 & 13.908 & \textbf{3710} & 12.832 & 0.000\\
\hline
 \multirow{12}{*}{Erdos992} &  12 &  100000 &  584 & 8.094 & \textbf{601} & 2.102 & 0.000 & 635 & 26.211 & \textbf{750} & 37.157 & 0.000\\
 &  12 &  500000 &  603 & 1.837 & \textbf{604} & 0.183 & 0.012 & 751 & 36.183 & \textbf{783} & 37.723 & 0.002\\
 &  12 &  1000000 &  \textbf{604} & 0.254 & \textbf{604} & 0.000 & 0.657 & 774 & 37.049 & \textbf{786} & 38.149 & 0.191\\
 &  78 &  100000 &  1835 & 35.766 & \textbf{2453} & 6.674 & 0.000 & 1658 & 31.957 & \textbf{2512} & 44.639 & 0.000\\
 &  78 &  500000 &  2345 & 18.151 & \textbf{2472} & 0.791 & 0.000 & 2169 & 35.736 & \textbf{2626} & 44.925 & 0.000\\
 &  78 &  1000000 &  2438 & 7.439 & \textbf{2473} & 0.430 & 0.000 & 2366 & 41.017 & \textbf{2634} & 43.671 & 0.000\\
 &  305 &  100000 &  2862 & 54.832 & \textbf{4706} & 7.897 & 0.000 & 2547 & 60.705 & \textbf{4584} & 87.698 & 0.000\\
 &  305 &  500000 &  3824 & 27.333 & \textbf{4772} & 1.570 & 0.000 & 3534 & 30.716 & \textbf{4789} & 20.565 & 0.000\\
 &  305 &  1000000 &  4201 & 20.239 & \textbf{4775} & 0.765 & 0.000 & 3898 & 28.104 & \textbf{4799} & 21.492 & 0.000\\
 &  610 &  100000 &  3076 & 46.668 & \textbf{5251} & 2.343 & 0.000 & 2553 & 57.540 & \textbf{4734} & 464.930 & 0.000\\
 &  610 &  500000 &  4428 & 31.694 & \textbf{5263} & 0.964 & 0.000 & 4017 & 50.324 & \textbf{5376} & 8.230 & 0.000\\
 &  610 &  1000000 &  4791 & 22.216 & \textbf{5264} & 0.254 & 0.000 & 4516 & 33.380 & \textbf{5378} & 8.477 & 0.000\\
\hline
 \multirow{12}{*}{ca-HepPh} &  13 &  100000 &  1687 & 34.310 & \textbf{1806} & 30.914 & 0.000 & 1750 & 53.601 & \textbf{1979} & 53.815 & 0.000\\
 &  13 &  500000 &  1830 & 27.572 & \textbf{1844} & 11.761 & 0.019 & 1980 & 50.385 & \textbf{2115} & 48.030 & 0.000\\
 &  13 &  1000000 &  \textbf{1854} & 15.539 & 1840 & 5.112 & 0.000 & 2058 & 50.351 & \textbf{2144} & 54.399 & 0.000\\
 &  105 &  100000 &  3740 & 48.996 & \textbf{4644} & 27.022 & 0.000 & 3598 & 31.674 & \textbf{4806} & 52.069 & 0.000\\
 &  105 &  500000 &  4309 & 28.317 & \textbf{4789} & 13.226 & 0.000 & 4238 & 40.329 & \textbf{5111} & 53.447 & 0.000\\
 &  105 &  1000000 &  4516 & 20.372 & \textbf{4802} & 10.016 & 0.000 & 4490 & 43.562 & \textbf{5176} & 42.558 & 0.000\\
 &  560 &  100000 &  5909 & 73.941 & \textbf{8544} & 22.516 & 0.000 & 5108 & 86.715 & \textbf{8626} & 40.014 & 0.000\\
 &  560 &  500000 &  7092 & 29.683 & \textbf{8800} & 11.032 & 0.000 & 6802 & 36.876 & \textbf{9058} & 31.968 & 0.000\\
 &  560 &  1000000 &  7527 & 24.174 & \textbf{8825} & 7.388 & 0.000 & 7249 & 32.562 & \textbf{9129} & 28.224 & 0.000\\
 &  1120 &  100000 &  5919 & 81.766 & \textbf{10196} & 144.717 & 0.000 & 5098 & 92.907 & \textbf{9095} & 1221.796 & 0.000\\
 &  1120 &  500000 &  8148 & 55.111 & \textbf{10494} & 8.483 & 0.000 & 7209 & 62.132 & \textbf{10622} & 45.053 & 0.000\\
 &  1120 &  1000000 &  8795 & 30.765 & \textbf{10523} & 7.263 & 0.000 & 8138 & 62.450 & \textbf{10683} & 30.947 & 0.000\\
\hline
 \multirow{12}{*}{ca-AstroPh} &  14 &  100000 &  2594 & 85.426 & \textbf{2867} & 57.047 & 0.000 & 2598 & 96.079 & \textbf{3026} & 106.242 & 0.000\\
 &  14 &  500000 &  2914 & 31.062 & \textbf{2974} & 5.396 & 0.000 & 3016 & 64.712 & \textbf{3321} & 80.235 & 0.000\\
 &  14 &  1000000 &  2962 & 12.480 & \textbf{2980} & 2.716 & 0.000 & 3195 & 77.509 & \textbf{3388} & 87.267 & 0.000\\
 &  133 &  100000 &  6484 & 77.132 & \textbf{8351} & 50.919 & 0.000 & 6221 & 82.193 & \textbf{8558} & 79.255 & 0.000\\
 &  133 &  500000 &  7551 & 51.042 & \textbf{8709} & 25.847 & 0.000 & 7362 & 61.676 & \textbf{9214} & 64.955 & 0.000\\
 &  133 &  1000000 &  7968 & 40.202 & \textbf{8749} & 14.914 & 0.000 & 7817 & 62.797 & \textbf{9372} & 68.257 & 0.000\\
 &  895 &  100000 &  9511 & 120.361 & \textbf{15033} & 32.462 & 0.000 & 8170 & 122.610 & \textbf{14467} & 1262.081 & 0.000\\
 &  895 &  500000 &  12360 & 60.272 & \textbf{15611} & 16.028 & 0.000 & 11387 & 100.119 & \textbf{15861} & 34.325 & 0.000\\
 &  895 &  1000000 &  13017 & 35.102 & \textbf{15690} & 11.610 & 0.000 & 12490 & 39.435 & \textbf{16014} & 27.498 & 0.000\\
 &  1790 &  100000 &  9502 & 97.540 & \textbf{16984} & 162.622 & 0.000 & 8137 & 104.415 & \textbf{13313} & 2546.964 & 0.000\\
 &  1790 &  500000 &  12750 & 88.401 & \textbf{17473} & 8.353 & 0.000 & 11374 & 104.602 & \textbf{17039} & 822.188 & 0.000\\
 &  1790 &  1000000 &  14103 & 54.069 & \textbf{17527} & 6.369 & 0.000 & 12743 & 112.303 & \textbf{17543} & 158.486 & 0.000\\
\hline
 \multirow{12}{*}{ca-CondMat} &  14 &  100000 &  1514 & 56.623 & \textbf{1766} & 44.191 & 0.000 & 1411 & 76.166 & \textbf{1759} & 63.903 & 0.000\\
 &  14 &  500000 &  1802 & 25.451 & \textbf{1854} & 5.063 & 0.000 & 1773 & 62.028 & \textbf{2016} & 79.008 & 0.000\\
 &  14 &  1000000 &  1846 & 8.628 & \textbf{1857} & 1.717 & 0.000 & 1912 & 69.264 & \textbf{2068} & 77.399 & 0.000\\
 &  146 &  100000 &  4388 & 71.953 & \textbf{6668} & 49.261 & 0.000 & 4106 & 91.413 & \textbf{6650} & 68.737 & 0.000\\
 &  146 &  500000 &  5585 & 61.992 & \textbf{7054} & 17.553 & 0.000 & 5264 & 65.588 & \textbf{7412} & 73.412 & 0.000\\
 &  146 &  1000000 &  6092 & 50.561 & \textbf{7091} & 8.705 & 0.000 & 5776 & 64.557 & \textbf{7560} & 73.139 & 0.000\\
 &  1068 &  100000 &  7187 & 130.346 & \textbf{15758} & 38.782 & 0.000 & 5779 & 149.352 & \textbf{14515} & 2634.382 & 0.000\\
 &  1068 &  500000 &  11334 & 67.041 & \textbf{16727} & 18.791 & 0.000 & 9655 & 129.937 & \textbf{17038} & 47.148 & 0.000\\
 &  1068 &  1000000 &  12364 & 69.321 & \textbf{16844} & 11.085 & 0.000 & 11533 & 79.858 & \textbf{17287} & 53.123 & 0.000\\
 &  2136 &  100000 &  7211 & 133.166 & \textbf{19120} & 204.254 & 0.000 & 5823 & 137.877 & \textbf{13642} & 3603.578 & 0.000\\
 &  2136 &  500000 &  11556 & 115.359 & \textbf{20093} & 12.913 & 0.000 & 9675 & 135.492 & \textbf{19513} & 1682.649 & 0.000\\
  &  2136 &  1000000 &  13652 & 84.783 & \textbf{20218} & 9.453 & 0.000 & 11632 & 96.193 & \textbf{20446} & 87.106 & 0.000\\
 \hline
    \end{tabular}
    \caption{Maximum coverage scores obtained by \gsemo and \swgsemo
    }
    \label{tab:combined}
\end{table*}

In  Table~\ref{tab:combinedpop}, we show the average number of trade-offs given by the final populations of the two algorithms for the 30 runs of each setting. We first examine the uniform setting. 
The budgets for our experiments are chosen small enough such that not all nodes can be covered by any solution.
Therefore, in the ideal case, both algorithms would produce $B+1$ trade-offs in the uniform setting. It can be observed that this is roughly happening for the two smallest graphs ca-CSphd, ca-GrQc. For the remaining $4$ graphs, \gsemo produces significantly less points when considering the constraint bound $\lfloor n/20 \rfloor, \lfloor n/10 \rfloor$ while the number of trade-offs obtained by \swgsemo is close to $B+1$ is most uniform settings. Considering the random setting, we can see that the number of trade-offs produced by \swgsemo is significantly higher than for \gsemo. In the case of large graphs, the number of trade-offs produced is up to four times larger than the number of trade-offs produced by \gsemo, e.g. for graph ca-CondMat and $B=2136$. Overall this suggests that the larger number of trade-offs produced in a systematic way by the sliding window approach significantly contributes to the superior performance of \swgsemo.

 \begin{table}
\scriptsize
    \centering
    \begin{tabular}{|c|c|c||c|c||c|c|} \hline 
       & & & \multicolumn{2}{|c||}{\bfseries Uniform } & \multicolumn{2}{|c|}{\bfseries Random }\\ \hline
 
 Graph & $B$ & $t_{\max}$ &     G &  SWG & G & SWG   \\
\hline
 \multirow{12}{*}{ca-CSphd} &  10 &  100000 &  11 & 11 & 73 & 92\\
 &  10 &  500000 &  11 & 11 & 125 & 130\\
 &  10 &  1000000 &  11 & 11 & 133 & 132\\
 &  43 &  100000 &  44 & 44 & 172 & 280\\
 &  43 &  500000 &  44 & 44 & 287 & 435\\
 &  43 &  1000000 &  44 & 44 & 390 & 475\\
 &  94 &  100000 &  94 & 95 & 281 & 496\\
 &  94 &  500000 &  95 & 95 & 422 & 686\\
 &  94 &  1000000 &  95 & 95 & 540 & 761\\
 &  188 &  100000 &  180 & 189 & 404 & 785\\
 &  188 &  500000 &  189 & 189 & 591 & 985\\
 &  188 &  1000000 &  189 & 189 & 734 & 1068\\
\hline
 \multirow{12}{*}{ca-GrQc} &  12 &  100000 &  13 & 13 & 89 & 115\\
 &  12 &  500000 &  13 & 13 & 142 & 191\\
 &  12 &  1000000 &  13 & 13 & 185 & 236\\
 &  64 &  100000 &  65 & 65 & 261 & 439\\
 &  64 &  500000 &  65 & 65 & 372 & 700\\
 &  64 &  1000000 &  65 & 65 & 448 & 854\\
 &  207 &  100000 &  200 & 208 & 493 & 1021\\
 &  207 &  500000 &  207 & 208 & 710 & 1460\\
 &  207 &  1000000 &  208 & 208 & 841 & 1703\\
 &  415 &  100000 &  347 & 414 & 611 & 1500\\
 &  415 &  500000 &  400 & 416 & 967 & 1996\\
 &  415 &  1000000 &  409 & 416 & 1134 & 2246\\
\hline
 \multirow{12}{*}{Erdos992} &  12 &  100000 &  13 & 13 & 76 & 103\\
 &  12 &  500000 &  13 & 13 & 129 & 193\\
 &  12 &  1000000 &  13 & 13 & 181 & 251\\
 &  78 &  100000 &  78 & 79 & 237 & 395\\
 &  78 &  500000 &  79 & 79 & 330 & 684\\
 &  78 &  1000000 &  79 & 79 & 393 & 913\\
 &  305 &  100000 &  253 & 305 & 441 & 970\\
 &  305 &  500000 &  291 & 306 & 668 & 1497\\
 &  305 &  1000000 &  298 & 306 & 777 & 1885\\
 &  610 &  100000 &  296 & 588 & 440 & 1031\\
 &  610 &  500000 &  483 & 610 & 818 & 1769\\
 &  610 &  1000000 &  522 & 611 & 1014 & 2073\\
\hline
 \multirow{12}{*}{ca-HepPh} &  13 &  100000 &  14 & 14 & 91 & 118\\
 &  13 &  500000 &  14 & 14 & 125 & 157\\
 &  13 &  1000000 &  14 & 14 & 143 & 195\\
 &  105 &  100000 &  105 & 106 & 344 & 634\\
 &  105 &  500000 &  106 & 106 & 489 & 878\\
 &  105 &  1000000 &  106 & 106 & 553 & 1040\\
 &  560 &  100000 &  408 & 558 & 634 & 2083\\
 &  560 &  500000 &  521 & 560 & 1176 & 2849\\
 &  560 &  1000000 &  543 & 561 & 1364 & 3304\\
 &  1120 &  100000 &  409 & 1093 & 644 & 2295\\
 &  1120 &  500000 &  813 & 1116 & 1306 & 3854\\
 &  1120 &  1000000 &  951 & 1117 & 1718 & 4295\\
\hline
 \multirow{12}{*}{ca-AstroPh} &  14 &  100000 &  15 & 15 & 94 & 119\\
 &  14 &  500000 &  15 & 15 & 125 & 155\\
 &  14 &  1000000 &  15 & 15 & 142 & 190\\
 &  133 &  100000 &  132 & 134 & 404 & 808\\
 &  133 &  500000 &  134 & 134 & 565 & 1074\\
 &  133 &  1000000 &  134 & 134 & 663 & 1292\\
 &  895 &  100000 &  416 & 890 & 653 & 2719\\
 &  895 &  500000 &  764 & 895 & 1372 & 3921\\
 &  895 &  1000000 &  818 & 896 & 1755 & 4434\\
 &  1790 &  100000 &  413 & 1715 & 643 & 2237\\
 &  1790 &  500000 &  848 & 1767 & 1366 & 4525\\
 &  1790 &  1000000 &  1121 & 1776 & 1830 & 5263\\
\hline
 \multirow{12}{*}{ca-CondMat} &  14 &  100000 &  15 & 15 & 87 & 106\\
 &  14 &  500000 &  15 & 15 & 111 & 137\\
 &  14 &  1000000 &  15 & 15 & 124 & 166\\
 &  146 &  100000 &  144 & 147 & 410 & 819\\
 &  146 &  500000 &  147 & 147 & 572 & 1066\\
 &  146 &  1000000 &  147 & 147 & 662 & 1286\\
 &  1068 &  100000 &  424 & 1063 & 650 & 3244\\
 &  1068 &  500000 &  864 & 1068 & 1437 & 4953\\
 &  1068 &  1000000 &  952 & 1068 & 1929 & 5772\\
 &  2136 &  100000 &  425 & 2084 & 655 & 2801\\
 &  2136 &  500000 &  906 & 2125 & 1428 & 6313\\
  &  2136 &  1000000 &  1228 & 2131 & 1953 & 7724\\
\hline
    \end{tabular}
    \vspace{0.3cm}
    \caption{Final number of trade-off solutions obtained by \gsemo(G) and \swgsemo(SWG)}
    
    \label{tab:combinedpop}
\end{table}

 \section{Conclusions}
 Pareto optimization using \gsemo has widely been applied in the context of submodular optimization. We introduced the Sliding Window \gsemo algorithm which selects an individual due to time progress and constraint value in the parent selection step. Our theoretical analysis provides better runtime bounds for \swgsemo while achieving the same worst-case approxmation ratios as \gsemo. 
 Our experimental investigations for the maximum coverage problem shows that \swgsemo outperforms \gsemo for a wide range of settings. We also provided additional insights into the optimization process by showing that \swgsemo computes significantly more trade-off then \gsemo for instances with random weights or uniform instances with large budgets.

\section*{Acknowledgments}
This work has been supported by the Australian Research Council (ARC) through grant FT200100536.

\end{document}